%
\documentclass[runningheads]{llncs}
\usepackage[pdftex]{graphicx}
\usepackage{epstopdf}

\usepackage{epsfig,amssymb,color}
\usepackage{amsmath}
\usepackage{bm}
\usepackage{booktabs}
\usepackage{color}
\usepackage{multicol}
\usepackage{multirow}
\usepackage[noorphans]{quoting}

\usepackage[ruled,vlined,algo2e]{algorithm2e}
\newtheorem{defi}{Definition}
\newtheorem{prop}[defi]{Proposition}

\newcommand{\proofend}{\hfill$\Box$\vspace{2mm}}

\newcommand{\argmin}{\mathop{\mathrm{argmin\,}}}

\newcommand{\iid}{\stackrel{\mathrm{i.i.d.}}{\sim}}


\newcommand{\mathbbR}{\mathbb{R}}

\newcommand{\boldone}{{\boldsymbol{1}}}

\newcommand{\boldC}{{\boldsymbol{C}}}

\newcommand{\boldH}{{\boldsymbol{H}}}
\newcommand{\boldI}{{\boldsymbol{I}}}

\newcommand{\boldK}{{\boldsymbol{K}}}
\newcommand{\boldL}{{\boldsymbol{L}}}

\newcommand{\bolda}{{\boldsymbol{a}}}
\newcommand{\boldb}{{\boldsymbol{b}}}

\newcommand{\boldh}{{\boldsymbol{h}}}

\newcommand{\boldk}{{\boldsymbol{k}}}
\newcommand{\boldl}{{\boldsymbol{l}}}

\newcommand{\boldx}{{\boldsymbol{x}}}
\newcommand{\boldy}{{\boldsymbol{y}}}

\newcommand{\boldalpha}{{\boldsymbol{\alpha}}}

\newcommand{\boldPi}{{\boldsymbol{\mathrm{\Pi}}}}

\newcommand{\boldvarphi}{{\boldsymbol{\varphi}}}

\newcommand{\calD}{{\mathcal{D}}}

\newcommand{\calX}{{\mathcal{X}}}
\newcommand{\calY}{{\mathcal{Y}}}



\newcommand{\ptr}{p}



\usepackage[misc]{ifsym}
\usepackage[hyphens]{url}

%

\begin{document}
\title{LSMI-Sinkhorn: Semi-supervised Mutual Information Estimation with Optimal Transport}
\titlerunning{LSMI-Sinkhorn} \toctitle{LSMI-Sinkhorn: Semi-supervised Mutual Information Estimation with Optimal Transport}
%
\author{Yanbin Liu[\Letter]$^1$\thanks{Equal contribution, $\text{\Letter}$ Corresponding Author}, Makoto Yamada$^{2,3\star}$, Yao-Hung Hubert Tsai$^4$, Tam Le$^3$, Ruslan Salakhutdinov$^4$, \and Yi Yang$^1$}
\authorrunning{Liu~Y., Yamada~M., Tsai~YH., Le~T., Salakhutdinov~R., Yang~Y.} \tocauthor{Liu~Y., Yamada~M., Tsai~YH., Le~T., Salakhutdinov~R., Yang~Y.}
%
\institute{$^1$AAII, University of Technology Sydney\\$^2$Kyoto University, $^3$RIKEN AIP, $^4$Carnegie Mellon University\\\email{csyanbin@gmail.com}}

\maketitle \setcounter{footnote}{0}             
\begin{abstract}
Estimating mutual information is an important statistics and machine learning 
problem. To estimate the mutual information from data, a common practice is preparing a set of paired samples $\{(\boldx_i,\boldy_i)\}_{i = 1}^n$ $\iid p(\boldx,\boldy)$. However, in many situations, it is difficult to obtain a large number of data pairs. 
To address this problem, we propose the semi-supervised Squared-loss Mutual Information (SMI) estimation method using a small number of paired samples and the available unpaired ones.  
We first represent SMI through the density ratio function, where the expectation is approximated by the samples from marginals and its assignment parameters. The objective is formulated using the optimal transport problem and  quadratic programming. 
Then, we introduce the \textbf{L}east-\textbf{S}quares \textbf{M}utual \textbf{I}nformation with \textbf{Sinkhorn} (\textbf{LSMI-Sinkhorn}) algorithm for efficient optimization. 
Through experiments, we first demonstrate that the proposed method can estimate the SMI without a large number of paired samples. 
Then, we show the effectiveness of the proposed LSMI-Sinkhorn algorithm on various types of machine learning problems such as image matching and photo album summarization. Code can be found at \url{https://github.com/csyanbin/LSMI-Sinkhorn}

\keywords{Mutual information estimation \and Density ratio \and Sinkhorn algorithm \and Optimal transport.}
\end{abstract}
%
%
%

\section{Introduction}
Mutual information (MI) represents the statistical independence between two random variables \cite{book:Cover+Thomas:2006}, and it is widely used in various types of machine learning applications including feature selection \cite{BMCBio:Suzuki+etal:2009a,suzuki2009mutual}, dimensionality reduction \cite{suzuki2010sufficient}, and causal inference \cite{yamada2010dependence}. More recently, deep neural network (DNN) models have started using MI as a regularizer for obtaining better representations from data such as infoVAE \cite{zhao2017infovae} and deep infoMax \cite{hjelm2018learning}. Another application is improving the generative adversarial networks (GANs) \cite{goodfellow2014generative}. For instance, Mutual Information Neural Estimation (MINE) \cite{belghazi2018mutual} was proposed to maximize or minimize the MI in deep networks and alleviate the mode-dropping issues in GANS. In all these examples, MI estimation is the core of all these applications.

In various MI estimation approaches, the probability density ratio function is considered to be one of the most important components:
\[
r(\boldx,\boldy) = \frac{p(\boldx,\boldy)}{p(\boldx)p(\boldy)}.
\]

A straightforward method to estimate this ratio is the estimation of the probability densities (i.e., $p(\boldx,\boldy)$, $p(\boldx)$, and $p(\boldy)$), followed by calculating their ratio. However, directly estimating the probability density is difficult, thereby making this two-step approach inefficient. To address the issue, Suzuki {\em et al.} \cite{suzuki2009mutual} proposed to directly estimate the density ratio by avoiding the density estimation \cite{BMCBio:Suzuki+etal:2009a,suzuki2009mutual}. Nonetheless, the abovementioned methods requires a large number of paired data when estimating the MI.

Under practical setting, we can only obtain a small number of paired samples. For example,  it requires a massive amount of human labor to obtain one-to-one correspondences from one language to another. Thus, it prevents us to easily measure the MI across languages. Hence, a research question arises: 

\begin{quoting}
\itshape
Can we perform mutual information estimation using unpaired samples and
a small number of data pairs?
\end{quoting}

To answer the above question, in this paper, we propose a semi-supervised MI estimation approach,  particularly designed for the Squared-loss Mutual Information (SMI) (a.k.a., $\chi^2$-divergence between $p(\boldx,\boldy)$ and $p(\boldx)p(\boldy)$) \cite{BMCBio:Suzuki+etal:2009a}. We first formulate the SMI estimation as the optimal transport problem with density-ratio estimation. 
Then, we propose the \textbf{L}east-\textbf{S}quares \textbf{M}utual \textbf{I}nformation with \textbf{Sinkhorn} (\textbf{LSMI-Sinkhorn}) algorithm to optimize the problem. The algorithm has the computational complexity of $O(n_x n_y)$; hence, it is computationally efficient. 
 Through experiments, we first demonstrate that the proposed method can estimate the SMI without a large number of paired samples. 
 Then, we visualize the optimal transport matrix, which is an approximation of the joint density $p(\boldx, \boldy)$, for a better understanding of the proposed algorithm. 
 Finally, for image matching and photo album summarization, we show the effectiveness of the proposed method.

The contributions of this paper can be summarized as follows:
\begin{itemize}
\item We proposed the semi-supervised Squared-loss Mutual Information (SMI) estimation approach that 
does not require a large number of paired samples.

\item We formulate mutual information estimation as a joint density-ratio fitting and optimal transport problem, and propose an efficient \textbf{LSMI-Sinkhorn} algorithm to optimize it with a monotonical decreasing guarantee. 

\item We experimentally demonstrate the effectiveness of the proposed LSMI-Sinkhorn for MI estimation, and further show its broader applications to the image matching and photo album summarization problems. 

\end{itemize}

\section{Problem Formulation}
In this section, we formulate the problem of Squared-loss Mutual Information (SMI) estimation using a small number of paired samples and a large number of unpaired samples.

Formally, let $\calX \subset \mathbbR^{d_x}$ be the domain of random variable $\boldx$ and $\calY \subset \mathbbR^{d_y}$ be the domain of another random variable $\boldy$.  Suppose we are given $n$ independent and identically distributed (i.i.d.) \emph{paired} samples:
\[
\{(\boldx_i,\boldy_i)\}_{i = 1}^{n},
\]
where 
the number of paired samples $n$ is small. 
Apart from the paired samples, we also have access to
 $n_x$ and $n_y$ i.i.d. samples from the marginal distributions:
\[
\{\boldx_i\}_{i = n+1}^{n+n_x} \iid \ptr(\boldx)~\text{and}~\{\boldy_j\}_{j = n + 1}^{n +n_y} \iid \ptr(\boldy),
\]
where the number of unpaired samples $n_x$ and $n_y$ is much larger than that of paired samples $n$ (e.g., $n = 10$ and $n_x = n_y = 1000$).
We also denote $\boldx'_i = \boldx_{i -n}, i \in \{n+1,n+2, \ldots, n + n_x\}$ and $\boldy'_j = \boldy_{j -n}, j \in \{n+1,n+2, \ldots, n + n_y\}$, respectively. 
Note that the input dimensions $d_x$, $d_y$  and the number of samples $n_x$, $n_y$ may be different.  

This paper aims to estimate the SMI \cite{BMCBio:Suzuki+etal:2009a} (a.k.a.,  $\chi^2$-divergence between $p(\boldx,\boldy)$ and $p(\boldx)p(\boldy)$) 
from $\{(\boldx_i,\boldy_i)\}_{i = 1}^{n}$ with the help of the extra unpaired samples $\{\boldx_i\}_{i = n+1}^{n+n_x}$ and $\{\boldy_j\}_{j = n+1}^{n+n_y}$.
Specifically, the SMI between random variables $X$ and $Y$ is defined as
\begin{align}
\label{eq:smi}
\textnormal{SMI}(X,Y) \!&=\! \frac{1}{2}\!\iint \! \left(r(\boldx, \boldy) \!-\! 1 \right)^2\!\! \ptr(\boldx) \ptr(\boldy)\textnormal{d}\boldx \textnormal{d}\boldy, 
\end{align}
where 
$r(\boldx, \boldy) = \frac{\ptr(\boldx,\boldy)}{p(\boldx)p(\boldy)}$
is the density-ratio function. 
\emph{SMI takes 0 if and only if $X$ and $Y$ are independent (i.e., $p(\boldx,\boldy) = p(\boldx)p(\boldy)$), and takes a positive value if they are not independent.}

Naturally, if we know the estimation of the density-ratio function, then we can approximate the SMI in Eq.~\ref{eq:smi} as 
\begin{align*}
\widehat{\textnormal{SMI}}(X,Y)  \!&=\! \frac{1}{2(n\!+\!n_x) (n\!+\!n_y)}\!\sum_{i =1}^{n+n_x}\!\sum_{j = 1}^{n+n_y}\!\! \left( r_{\boldalpha}(\boldx_i,\boldy_j) \!-\!1\right)^2,
\end{align*}
where $r_{\boldalpha}(\boldx,\boldy)$ is an estimation of the true density ratio function $r(\boldx,\boldy)$ parameterized by $\boldalpha$. More details are discussed in \S\ref{subsec:LSMI-Sinkhorn}. 

However, in many real applications, it is difficult or laborious to obtain sufficient paired samples for density ratio estimation, which may result in high variance and bias when computing the SMI. 
In this paper, the key idea is to align the unpaired samples under this limited number of paired samples setting, and propose an objective to incorporate both the paired samples and aligned samples for a better SMI estimation.

\section{Methodology}
In this section, we propose the SMI estimation algorithm with limited number of paired samples and large number of unpaired samples.

\subsection{Least-Squares Mutual Information with Sinkhorn Algorithm}
\label{subsec:LSMI-Sinkhorn}
We employ the following density-ratio model. 
It first samples two sets of basis vectors $\{\widetilde{\boldx}_i\}_{i = 1}^b$ and $\{\widetilde{\boldy}_i\}_{i = 1}^b$ from $\{\boldx_i\}_{i = 1}^{n+n_x}$ and $\{\boldy_j\}_{j = 1}^{n+n_y}$, then computes 
\begin{align}
\label{eq:ratio-model}
    r_\boldalpha(\boldx,\boldy) &= \sum_{\ell = 1}^b \alpha_\ell K(\widetilde{\boldx}_\ell,\boldx)L (\widetilde{\boldy}_\ell,\boldy) = \boldalpha^\top \boldvarphi(\boldx,\boldy),
\end{align}
where $\boldalpha \in \mathbbR^b$, $K(\boldsymbol{\cdot}\,,\, \boldsymbol{\cdot})$ and $L(
\boldsymbol{\cdot}\,,\, \boldsymbol{\cdot})$ are kernel functions,  
$\boldvarphi(\boldx,\boldy) = \boldk(\boldx)\circ \boldl(\boldy)$ with $\boldk(\boldx) = [K(\widetilde{\boldx}_1,\boldx), \ldots, K(\widetilde{\boldx}_b,\boldx)]^\top \in \mathbbR^{b}$, $\boldl(\boldy) = [L(\widetilde{\boldy}_1,\boldy), \ldots, L(\widetilde{\boldy}_b,\boldy)]^\top \in \mathbbR^{b}$.

In this paper, we optimize $\boldalpha$ by minimizing the squared error loss between the true density-ratio function $r(\boldx,\boldy)$ and its parameterized model $r_\boldalpha(\boldx,\boldy)$:
\begin{align}
\label{eq:lsmi-loss}
    \text{Loss} &= \frac{1}{2}\iint \left(r_\boldalpha(\boldx,\boldy) - \frac{p(\boldx,\boldy)}{p(\boldx)p(\boldy)} 
    \right)^2 p(\boldx)p(\boldy)\text{d}\boldx \text{d}\boldy \nonumber \\
    &= \frac{1}{2}\iint r_\boldalpha(\boldx,\boldy)^2 p(\boldx)p(\boldy)\text{d}\boldx \text{d}\boldy 
    - \iint r_\boldalpha(\boldx,\boldy)p(\boldx,\boldy)\text{d}\boldx \text{d}\boldy 
    + \text{const.} 
\end{align}
For the first term of Eq.~\eqref{eq:lsmi-loss}, we can approximate it by using a large number of unpaired samples as it only involves $p(\boldx), p(\boldy)$. 
However, to approximate the second term, paired samples from the joint distribution (i.e., $p(\boldx,\boldy)$) are required. 
Since we only have a limited number of paired samples in our setting, the approximation of the second term may have high bias and variance. 

To deal with this issue, we leverage the abundant unpaired samples to help the approximation of the second term. 
Since we have no access to the true pair information for these unpaired samples, we propose a practical way to estimate their pair information. 
Specifically, we introduce a matrix $\mathrm{\boldPi}$ ($\pi_{ij} \geq 0$, $\sum_{i = 1}^{n_x}\sum_{j = 1}^{n_y} \pi_{i,j} = 1$) that can be regarded as a parameterized estimation of the joint density function $p(\boldx,\boldy)$. Then, we approximate the second term of Eq.~\eqref{eq:lsmi-loss} 
\begin{equation}
    \label{eq:second}
    \iint r_\boldalpha(\boldx,\boldy)p(\boldx,\boldy)\text{d}\boldx\text{d}\boldy 
    \approx \frac{\beta}{n}\sum\limits_{i =1}^n r_{\boldalpha}(\boldx_i,\boldy_i)+ (1-\beta) \sum\limits_{i =1}^{n_x}\sum\limits_{j =1}^{n_y}\pi_{ij} r_{\boldalpha}(\boldx'_i,\boldy'_j),
\end{equation}
where $0\leq \beta \leq 1$ is a parameter to balance the terms of paired and unpaired samples.   Ideally, if we can set $\pi_{ij} = \delta(\boldx'_i,\boldy'_j)/{n'}$ where $\delta(\boldx'_i,\boldy'_j)$ is $1$ for all paired $(\boldx'_i$, $\boldy'_j)$ and $0$ otherwise, and $n'$ is the total number of pairs, then we can recover the original empirical estimation (i.e., $\pi_{ij} = p(\boldx'_i,\boldy'_j)$ ideally). 

Now, we can substitute Eq.~\eqref{eq:ratio-model} and Eq.~\eqref{eq:second} back into the squared error loss function Eq.~\eqref{eq:lsmi-loss} to obtain the final loss function  as 
\begin{align*}
    J(\boldPi, \boldalpha) =  \frac{1}{2}\boldalpha^\top \boldH \boldalpha -\boldalpha^\top \boldh_{\boldPi,\beta},
\end{align*}
where
\begin{align*}
    \boldH \!&=\! \frac{1}{(n+n_x)(n+n_y)}\!\sum_{i = 1}^{n+n_x}\sum_{j = 1}^{n+n_y}\! \boldvarphi({\boldx}_i,{\boldy}_j)\boldvarphi({\boldx}_i,{\boldy}_j)^\top,\\
    \boldh_{\boldPi,\beta} &= \frac{\beta}{n}\sum_{i = 1}^n  \boldvarphi(\boldx_i,\boldy_i)+ (1-\beta)\sum_{i =1}^{n_x}\sum_{j =1}^{n_y}\pi_{ij}\boldvarphi(\boldx'_i,\boldy'_j).
\end{align*}
Since we want to estimate the density-ratio function by minimizing Eq. \eqref{eq:lsmi-loss}, the optimization problem is then given as
\begin{align}
\label{eq:smi-ot}
    \min_{\boldPi, \boldalpha} &\quad  J(\boldPi, \boldalpha) \!=\!  \frac{1}{2}\boldalpha^\top \boldH \boldalpha -\boldalpha^\top \boldh_{\boldPi,\beta} \!+\! \epsilon H(\boldPi) \!+\! \frac{\lambda}{2}\|\boldalpha\|_2^2 \nonumber \\
    \text{s.t.} &\quad  \boldPi \boldone_{n_y} = n_x^{-1}\boldone_{n_x} ~\text{and}~ \boldPi^\top \boldone_{n_x} = n_y^{-1}\boldone_{n_y}.
\end{align}
Here, we add several regularization terms. $H(\boldPi) = \sum_{i = 1}^{n_x}\sum_{j =1}^{n_y}\pi_{ij} (\log \pi_{ij} -1)$ is the negative entropic regularization to ensure $\boldPi$ non-negative, and $\epsilon > 0$ is the corresponding regularization parameter. 
$\|\boldalpha\|_2^2$ is the regularization on $\boldalpha$, and $\lambda \geq 0$ is the corresponding regularization parameter.

\subsection{Optimization}
The objective function $J(\boldPi,\boldalpha)$ is not jointly convex. However, if we fix one variable, it becomes a convex function for the other. Thus, we employ the alternating optimization approach (see Algorithm \ref{alg:alg}) on $\boldPi$ and $\boldalpha$, respectively. 

\begin{algorithm2e}[t]
\caption{\label{alg:alg}LSMI-Sinkhorn Algorithm.}
 Initialize $\boldPi^{(0)}$ and $\boldPi^{(1)}$ such that $\|\boldPi^{(1)} - \boldPi^{(0)}\|_F > \eta$ ($\eta$ is the stopping parameter), and $\boldalpha^{(0)}$, set the regularization parameters $\epsilon$ and $\lambda$, the number of maximum iterations $T$, and the iteration index $t = 1$. \\
\While{$t \leq T$ and $\|\boldPi^{(t)} - \boldPi^{(t-1)}\|_F > \eta$ }{
$\boldalpha^{(t+1)} = \argmin_{\boldalpha} J(\boldPi^{(t)},\boldalpha)$. \\
$\boldPi^{(t+1)} = \argmin_{\boldPi} J(\boldPi,\boldalpha^{(t+1)})$. \\
$t = t + 1$. \\
}
\textbf{return}  $\;\, \boldPi^{(t-1)}$ and $\boldalpha^{(t-1)}$.
\end{algorithm2e}

{\textbf{1) Optimizing $\boldPi$ using the Sinkhorn algorithm.} }
When fixing $\boldalpha$, the term in our objective relating to $\boldPi$ is
\begin{align*}
    \sum_{i =1}^{n_x}\sum_{j = 1}^{n_y} \pi_{ij} \boldalpha^\top \boldvarphi(\boldx'_i,\boldy'_j) &=\sum_{i =1}^{n_x}\sum_{j = 1}^{n_y} \pi_{ij} [\boldC_{\boldalpha}]_{ij},
\end{align*}
where $\boldC_{\boldalpha} = \boldK^\top \text{diag}(\boldalpha)\boldL \in \mathbbR^{n_x \times n_y}$, $\boldK = (\boldk(\boldx'_1), \boldk(\boldx'_2), \ldots, \boldk(\boldx'_{n_x})) \in \mathbbR^{b \times n_x}$, and $\boldL = (\boldl(\boldy'_1), \boldl(\boldy'_2), \ldots, \boldl(\boldy'_{n_y})) \in \mathbbR^{b \times n_y}$. This formulation can be considered as an optimal transport problem if we maximize it with respect to $\boldPi$ \cite{cuturi2013sinkhorn}. It is worth noting that the rank of $\boldC_\alpha$ is at most $b \ll \text{min}(n_x,n_y)$ with $b$ being a constant (e.g., $b = 100$), and the computational complexity of the cost matrix $\boldC_{\boldalpha}$ is $O(n_x n_y)$. The optimization problem with fixed $\boldalpha$ becomes
\begin{align}
    \nonumber
    \min_{\boldPi} &\quad  -\sum_{i =1}^{n_x}\sum_{j = 1}^{n_y} \pi_{ij}(1-\beta) [\boldC_{\boldalpha}]_{ij} +\epsilon H(\boldPi) \\
    \label{eq:convexPi}
    \text{s.t.} &\quad  \boldPi \boldone_{n_y} = n_x^{-1}\boldone_{n_x} ~\text{and}~ \boldPi^\top \boldone_{n_x} = n_y^{-1}\boldone_{n_y}\,,
\end{align}
which can be efficiently solved using the Sinkhorn algorithm \cite{cuturi2013sinkhorn,Sinkhorn-1974-Diagonal}~\footnote{In this paper, we use the log-stabilized Sinkhorn algorithm \cite{SIAM:schmitzer2019stabilized}.}. 
When $\boldalpha$ is fixed, problem~\eqref{eq:convexPi} is convex with respect to $\boldPi$.

{\textbf{2) Optimizing $\boldalpha$.} }
Next, when we fix $\boldPi$, the optimization problem becomes
\begin{align}
\label{eq:ratio-fit}
    \min_{\boldalpha} &\quad \frac{1}{2}\boldalpha^\top \boldH \boldalpha -\boldalpha^\top \boldh_{\boldPi,\beta} + \frac{\lambda}{2} \|\boldalpha\|_2^2\,.
\end{align}
Problem \eqref{eq:ratio-fit} is a quadratic programming and convex. It has an analytical solution
\begin{align}
\label{eq:analytical_solution}
    \widehat{\boldalpha} = (\boldH + \lambda \boldI_b)^{-1}\boldh_{\boldPi,\beta},
\end{align}
where $\boldI_b \in \mathbbR^{b \times b}$ is an identity matrix. Note that the $\boldH$ matrix does not depend on either $\boldPi$ or $\boldalpha$, and it is a positive definite matrix. 

{\textbf{Convergence Analysis.}}
To optimize $J(\boldPi,\boldalpha)$, we 
alternatively solve two convex optimization problems. Thus, the following property holds true. 
\begin{prop}
    \label{prop1}
    Algorithm \ref{alg:alg} will monotonically decrease the objective function $J(\boldPi,\boldalpha)$ in each iteration.
\end{prop}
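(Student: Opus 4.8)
The plan is to exploit the alternating structure of Algorithm~\ref{alg:alg} together with the two convexity facts already established in the text: for fixed $\boldalpha$, problem~\eqref{eq:convexPi} (equivalently the $\boldPi$-subproblem of $J$) is convex in $\boldPi$, and for fixed $\boldPi$, problem~\eqref{eq:ratio-fit} is a convex quadratic program in $\boldalpha$ with the exact minimizer given in closed form by Eq.~\eqref{eq:analytical_solution}. The key observation is that each of the two update steps in the while-loop is defined as an exact minimization over one block of variables, so neither step can increase the objective.

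First I would set up the notation: let $(\boldPi^{(t)},\boldalpha^{(t)})$ be the iterates, and recall that the body of the loop produces $\boldalpha^{(t+1)} = \argmin_{\boldalpha} J(\boldPi^{(t)},\boldalpha)$ and then $\boldPi^{(t+1)} = \argmin_{\boldPi} J(\boldPi,\boldalpha^{(t+1)})$. From the first update, since $\boldalpha^{(t+1)}$ is a minimizer of $\boldalpha \mapsto J(\boldPi^{(t)},\boldalpha)$ and $\boldalpha^{(t)}$ is a feasible point of that same problem, we get $J(\boldPi^{(t)},\boldalpha^{(t+1)}) \leq J(\boldPi^{(t)},\boldalpha^{(t)})$. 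From the second update, since $\boldPi^{(t+1)}$ minimizes $\boldPi \mapsto J(\boldPi,\boldalpha^{(t+1)})$ over the transport polytope and $\boldPi^{(t)}$ lies in that polytope, we get $J(\boldPi^{(t+1)},\boldalpha^{(t+1)}) \leq J(\boldPi^{(t)},\boldalpha^{(t+1)})$. Chaining the two inequalities yields $J(\boldPi^{(t+1)},\boldalpha^{(t+1)}) \leq J(\boldPi^{(t)},\boldalpha^{(t)})$, which is exactly the monotone decrease claimed. I would also remark that the $\boldalpha$-update is well defined because $\boldH + \lambda \boldI_b$ is positive definite (stated after Eq.~\eqref{eq:analytical_solution}), and the $\boldPi$-update is well defined because the entropically regularized problem~\eqref{eq:convexPi} is strictly convex and has a unique Sinkhorn solution, so each $\argmin$ is attained.

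The main obstacle, such as it is, is mostly bookkeeping rather than mathematics: one must make sure the ``same'' objective is being compared at each step, i.e. that the term $\epsilon H(\boldPi)$ and the ridge term $\tfrac{\lambda}{2}\|\boldalpha\|_2^2$ are consistently included in $J$ throughout both subproblems (the $\boldalpha$-subproblem~\eqref{eq:ratio-fit} drops the $\boldPi$-dependent terms as constants, and the $\boldPi$-subproblem~\eqref{eq:convexPi} drops the $\boldalpha$-dependent terms as constants, but neither alters the common value of $J$ at the iterate), and that feasibility of $\boldPi^{(t)}$ is preserved across iterations so it is a legitimate competitor in the next minimization. A secondary, minor point to mention is that ``monotone decrease'' here means non-increase; strict decrease need not hold (e.g.\ at a block-coordinatewise stationary point), which is consistent with the stopping criterion $\|\boldPi^{(t)} - \boldPi^{(t-1)}\|_F \leq \eta$ in Algorithm~\ref{alg:alg}. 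I would close by noting that since $J$ is bounded below on the feasible set (the quadratic form is positive semidefinite, the linear term is bounded on the compact transport polytope, and the entropy term is bounded below there), the monotone sequence $\{J(\boldPi^{(t)},\boldalpha^{(t)})\}_t$ converges.
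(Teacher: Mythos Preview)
Your argument is correct and follows essentially the same two-step chain as the paper's own proof: use $\boldalpha^{(t+1)} = \argmin_{\boldalpha} J(\boldPi^{(t)},\boldalpha)$ to get $J(\boldPi^{(t)},\boldalpha^{(t+1)}) \leq J(\boldPi^{(t)},\boldalpha^{(t)})$, then $\boldPi^{(t+1)} = \argmin_{\boldPi} J(\boldPi,\boldalpha^{(t+1)})$ to get $J(\boldPi^{(t+1)},\boldalpha^{(t+1)}) \leq J(\boldPi^{(t)},\boldalpha^{(t+1)})$, and chain. Your additional remarks on well-posedness of each subproblem, consistency of the objective across the two subproblems, non-strictness of the decrease, and boundedness below are sound elaborations that the paper omits.
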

\begin{proof}
	We show that  $J(\boldPi^{(t+1)},\boldalpha^{(t+1)}) \leq J(\boldPi^{(t)},\boldalpha^{(t)})$. First, because $\boldalpha^{(t+1)} = \argmin_{\boldalpha} J(\boldPi^{(t)}, \boldalpha)$ and $\boldalpha^{(t+1)}$ is the globally optimum solution, we have
    \[
    J(\boldPi^{(t)}, \boldalpha^{(t+1)}) \leq J(\boldPi^{(t)}, \boldalpha^{(t)}).
    \]
    Moreover, because $\boldPi^{(t+1)} = \argmin_{\boldPi} J(\boldPi, \boldalpha^{(t+1)})$ and $\boldPi^{(t+1)}$ is the globally optimum solution, we have
    \[
    J(\boldPi^{(t+1)}, \boldalpha^{(t+1)}) \leq J(\boldPi^{(t)}, \boldalpha^{(t+1)}).
    \]
    Therefore,
    \[
    J(\boldPi^{(t+1)}, \boldalpha^{(t+1)}) \leq J(\boldPi^{(t)}, \boldalpha^{(t)}). 
    \]  \proofend
    \end{proof}
    

{\bf Model Selection.}
Algorithm \ref{alg:alg} is dubbed as LSMI-Sinkhorn algorithm since it utilizes Sinkhorn algorithm for LSMI estimation. It includes several tuning parameters (i.e., $\lambda$ and $\beta$) and determining the model parameters is critical to obtain a good estimation of SMI. 
Accordingly, we use the cross-validation with the hold-out set to select the model parameters. 

First, the paired samples $\{(\boldx_i,\boldy_i)\}_{i = 1}^n$ are divided into two subsets $\calD_{\text{tr}}$ and $\calD_{\text{te}}$. Then, we train the density-ratio $r_{\boldalpha}(\boldx,\boldy)$ using $\calD_{\text{tr}}$ and the unpaired samples: $\{\boldx_i\}_{i = n+1}^{n + n_x}$ and $\{\boldy_j\}_{j = n+1}^{n + n_y}$. The hold-out error can be calculated by approximating Eq. \eqref{eq:lsmi-loss} using the hold-out samples $\calD_{\text{te}}$ as
\begin{align*}
    \widehat{J}_{\text{te}} = \frac{1}{2|\calD_{\text{te}}|^2} \sum_{\boldx,\boldy \in \calD_{\text{te}}}r_{\widehat{\boldalpha}}(\boldx,\boldy)^2 - \frac{1}{|\calD_{\text{te}}|} \sum_{(\boldx,\boldy) \in \calD_{\text{te}}} r_{\widehat{\boldalpha}}(\boldx,\boldy),
\end{align*}
where $|\calD|$ denotes the number of samples in the set $\calD$, $\sum_{\boldx,\boldy \in \calD_{\text{te}}}$ denotes the summation over all possible combinations of $\boldx$ and $\boldy$ in $\calD_{\text{te}}$, and $\sum_{(\boldx,\boldy) \in \calD_{\text{te}}}$ denotes the summation over all pairs of $(\boldx, \boldy)$ in $\calD_{\text{te}}$. We select the parameters that lead to the smallest $\widehat{J}_\text{te}$.

\subsection{Discussion}
{\bf Relation to Least-Squares Object Matching (LSOM).}
In this section, we show that the LSOM algorithm \cite{yamada2015cross,yamada2011cross} can be considered as a special case of the proposed framework. %
If $\boldPi$ is a permutation matrix and $n' = n_x = n_y$,
\begin{align*}
    \boldPi = \{0,1\}^{n'\times n'},~\boldPi \boldone_{n'} = \boldone_{n'},~\text{and}~ \boldPi^\top \boldone_{n'} = \boldone_{n'},
\end{align*}
where $\boldPi^\top \boldPi = \boldPi \boldPi^\top = \mathrm{\boldI}_{n'}$. 
Then, the estimation of SMI using the permutation matrix can be written as
\begin{align*}
    \widehat{\text{SMI}}(X,Y) 
    = \frac{\beta}{2n}\sum_{i =1}^n r_{\boldalpha}(\boldx_i,\boldy_i) + \frac{1}{2n'}\sum_{i =1}^{n'}(1-\beta) r_{\boldalpha}(\boldx'_i,\boldy'_{\pi_{(i)}}) - \frac{1}{2},
\end{align*}
where $\pi(i)$ is the permutation function. In order to calculate $\widehat{\text{SMI}}(X,Y)$, the optimization problem is written as
\begin{align*}
    \min_{\boldPi, \boldalpha} &\quad  \frac{1}{2}\boldalpha^\top \boldH \boldalpha  -\boldalpha^\top \boldh_{\boldPi,\beta} + \frac{\lambda}{2}\|\boldalpha\|_2^2 \nonumber \\
    \text{s.t.} &\quad \boldPi \boldone_{n'} = \boldone_{n'},~ \boldPi^\top \boldone_{n'} = \boldone_{n'},~ \boldPi \in \{0,~1\}^{n' \times n'}.
\end{align*}

To solve this problem, LSOM uses the Hungarian algorithm \cite{NRLQ:Kuhn:1955} instead of the Sinkhorn algorithm \cite{cuturi2013sinkhorn} for optimizing $\boldPi$. It is noteworthy that in the original LSOM algorithm, the permutation matrix is introduced to permute the Gram matrix (i.e., $\boldPi \boldL \boldPi^\top$) and $\boldPi$ is also included within the $\boldH$ computation. However, in our formulation, the permutation matrix depends only on $\boldh_{\boldPi, \beta}$. This difference enables us to show a monotonic decrease for the loss function of the proposed algorithm.  

Since LSOM aims to seek the alignment, it is more suitable to find the exact matching among samples when the exact matching exists. 
In contrast, the proposed LSMI-Sinkhorn is reliable even when there is no exact matching. 
Moreover, LSOM assumes the same number of samples (i.e., $n_x = n_y$), while our LSMI-Sinkhorn does not have this constraint. 
For computational complexity, the Hungarian algorithm requires $O({n'}^3)$ while the Sinkhorn requires $O({n'}^2)$.

{\bf Computational Complexity.} 
First, the computational complexity of estimating $\boldPi$ is based on the computation of the cost matrix $\boldC_\boldalpha$ and the Sinkhorn iterations. The computational complexity of $\boldC_{\boldalpha}$ is $O(n_x n_y)$ and that of Sinkhorn algorithm is $O(n_xn_y)$. Therefore, the computational complexity of the Sinkhorn iteration is $O(n_x n_y)$. 
Second, for the $\boldalpha$ computation, the complexity to compute $\boldH$ is $O((n+n_x)^2 + (n + n_y)^2)$ and that for $\boldh_{\boldPi, \beta}$ is $O(n_xn_y)$. In addition, estimating $\boldalpha$ has the complexity $O(b^3)$, which is negligible with a small constant $b$. 
To conclude, the total computational complexity of the initialization needs $O((n+n_x)^2 + (n + n_y)^2)$ and the iterations requires $O(n_x n_y)$. 
In particular, for small $n$ and large $n_x = n_y$, the computational complexity is $O(n_x^2)$.

As a comparison, for another related algorithm,  Gromove-Wasserstein~\cite{memoli2011gromov,peyre2016gromov}, the time complexity of computing the objective function is $O(n_x^4)$ for general cases and $O(n_x^3)$ for some specific losses (e.g. $L_2$ loss, Kullback-Leibler loss) \cite{peyre2016gromov}. 

\section{Related Work}
In this paper, we focus on the mutual information estimation problem. 
Moreover, the proposed LSMI-Sinkhorn algorithm is related to Gromov-Wasserstein \cite{peyre2016gromov,memoli2011gromov} and kernelized sorting~\cite{PAMI:Quadrianto+etal:2010,AAAI:djuric2012CKS}.

\noindent {\bf Mutual information estimation.} To estimate the MI, a straightforward approach is to estimate the probability density $p(\boldx,\boldy)$ from the paired samples $\{(\boldx_i,\boldy_i)\}_{i = 1}^n$, $p(\boldx)$ from $\{\boldx_i\}_{i = 1}^n$, and $p(\boldy)$ from $\{\boldy_i\}_{i=1}^n$, respectively. 

Because the estimation of the probability density is itself a difficult problem, this straightforward approach does not work well. 
To handle this, a density-ratio based approach can be promising \cite{suzuki2009mutual,BMCBio:Suzuki+etal:2009a}. 
More recently, deep learning based mutual information estimation algorithms have been proposed \cite{belghazi2018mutual,ozair2019wasserstein}. However, these approaches still require a large number of paired samples to estimate the MI. 
Thus, in real world situations when we only have a limited number of paired samples, existing approaches are not effective to obtain a reliable estimation. 
 

\noindent {\bf Gromov-Wasserstein and Kernelized Sorting.} 
Given two set of vectors in different spaces, the Gromov-Wasserstein distance \cite{memoli2011gromov} can be used to find the optimal alignment between them. 
This method considers the pairwise distance between samples in the same set to build the distance matrix, then it finds a matching by minimizing the difference between the pairwise distance matrices:
\begin{align*}
    \min_{\boldPi} &\quad \sum_{i = 1}^{n_x}\sum_{j = 1}^{n_y}\sum_{i' = 1}^{n_x}\sum_{i' = 1}^{n_y}\pi_{ij}\pi_{i'j'}(D(\boldx_i,\boldx_{i'}) - D(\boldy_j,\boldy_{j'}))^2,\\
    \text{s.t.}&\quad \boldPi \boldone_{n_y} = \bolda, \boldPi^\top \boldone_{n_x} = \boldb,\pi_{ij} \geq 0, 
\end{align*}
where $\bolda \in \Sigma_{n_x}$, $\boldb \in \Sigma_{n_y}$, and $\Sigma_n = \{p\in \mathbbR_n^+;\sum_i p_i=1 \}$ is the probability simplex. 

Computing Gromov-Wasserstein distance requires solving the quadratic assignment problem (QAP), and it is generally NP-hard for arbitrary inputs \cite{peyre2016gromov,peyre2019computational}. In this work, we estimate the SMI by simultaneously solving the alignment and fitting the distribution ratio by efficiently leveraging the Sinkhorn algorithm and properties of the squared-loss. 
Recently, semi-supervised Gromov-Wasserstein-based Optimal transport has been proposed and applied to the heterogeneous domain adaptation problems \cite{yan2018semi}. 
However, their method cannot be directly used to measure the independence between two sets of random variables. 
In contrast, we can achieve this by the estimation of the density-ratio function. 

Kernelized sorting methods~\cite{PAMI:Quadrianto+etal:2010,AAAI:djuric2012CKS} are highly related to Gromov-Wasserstein. Specifically, the kernelized sorting determines a set of paired samples by maximizing the Hilbert-Schmidt independence criterion (HSIC) between samples. 
Similar to LSOM~\cite{yamada2011cross}, the kernelized sorting also has the assumption of the same number of samples (i.e., $\{\boldx'_i\}_{i = 1}^{n'}$ and $\{\boldy'_i\}_{j = 1}^{n'}$). This assumption prohibits both LSOM and kernelized sorting from being applied to a broader range of applications, such as photo album summarization in Section~\ref{sec:album}. 
To the contrary, since the proposed LSMI-Sinkhorn does not rely on this assumption, it can be applied to more general scenarios when $n_x \neq n_y$.

\section{Experiments}
In this section, we first estimate the SMI on both the synthetic data and benchmark datasets. Then, we apply our algorithm to real world applications, i.e., deep image matching and photo album summarization. 

\subsection{Setup}
For the density-ratio model, we utilize the Gaussian kernels:
$${ K(\boldx,\boldx') \!=\! \exp\left(\!\!-\!\frac{\|\boldx \!-\! \boldx'\|_2^2}{2\sigma_x^2}\!\right), 
  L(\boldy,\boldy') \!=\! \exp\left(\!\!-\!\frac{\|\boldy \!-\! \boldy'\|_2^2}{2\sigma_y^2}\!\right), }$$
where $\sigma_x$ and $\sigma_y$ denote the widths of the kernel that are set using the median heuristic \cite{sriperumbudur2009kernel} as
$ \sigma_x = 2^{-1/2} \text{median}(\{\|\boldx_i - \boldx_j\|_2\}_{i,j=1}^{n_x}),
  \sigma_y = 2^{-1/2} \text{median}(\{\|\boldy_i - \boldy_j\|_2\}_{i,j=1}^{n_y}).$ 
We set the number of basis $b = 200$, $\epsilon = 0.3$, the maximum number of iterations $T=20$, and the stopping parameter $\eta = 10^{-9}$. $\beta$ and $\lambda$ are chosen by cross-validation.

\subsection{Convergence and Runtime}

We first demonstrate the convergence of the loss function and the estimated SMI value. Here, we generate synthetic data from $\boldy=0.5\boldx+\mathcal{N}(0,0.01)$ and randomly choose $n=50$ paired samples and $n_x=n_y=500$ unpaired samples. The convergence curve is shown in Figure \ref{fig:convergence}. The loss value and SMI value converge quickly ($<$5 iterations), which is consistent with Proposition \ref{prop1}.

\begin{figure}[t]
    \centering
    \includegraphics[width=0.75\textwidth]{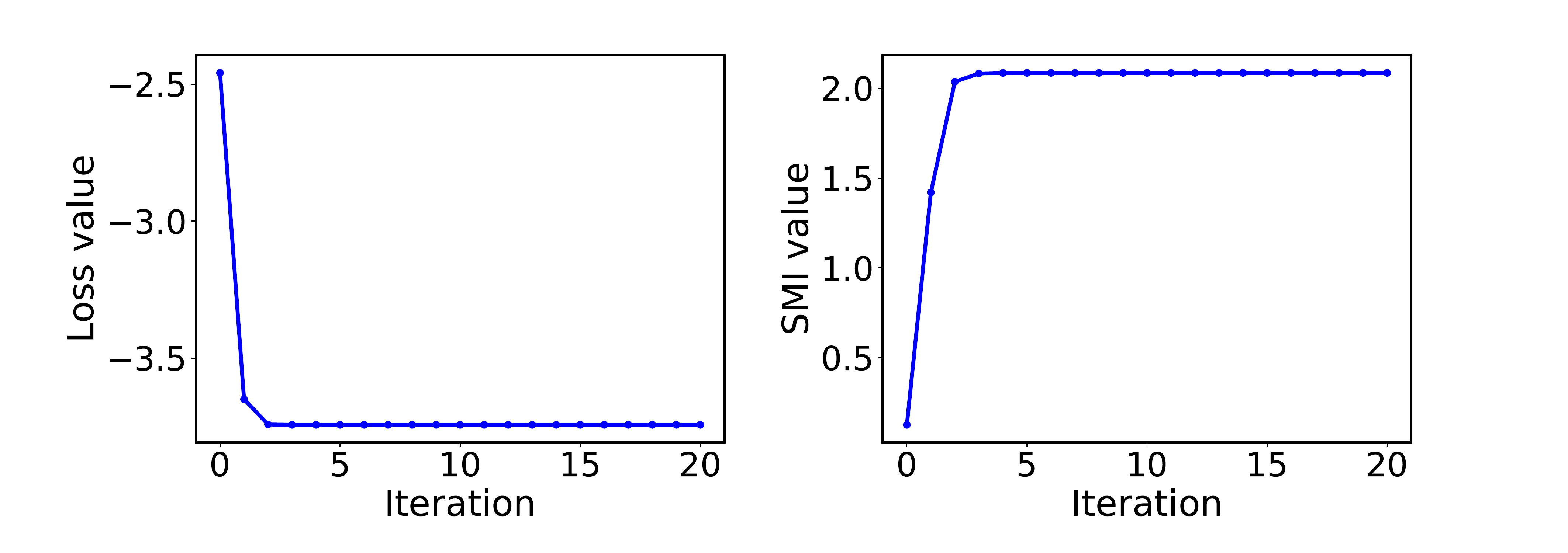}
    \caption{Convergence curves of the loss and SMI values.} 
    \label{fig:convergence}
\end{figure}

\begin{figure}[t]
    \centering
    \includegraphics[width=0.6\textwidth]{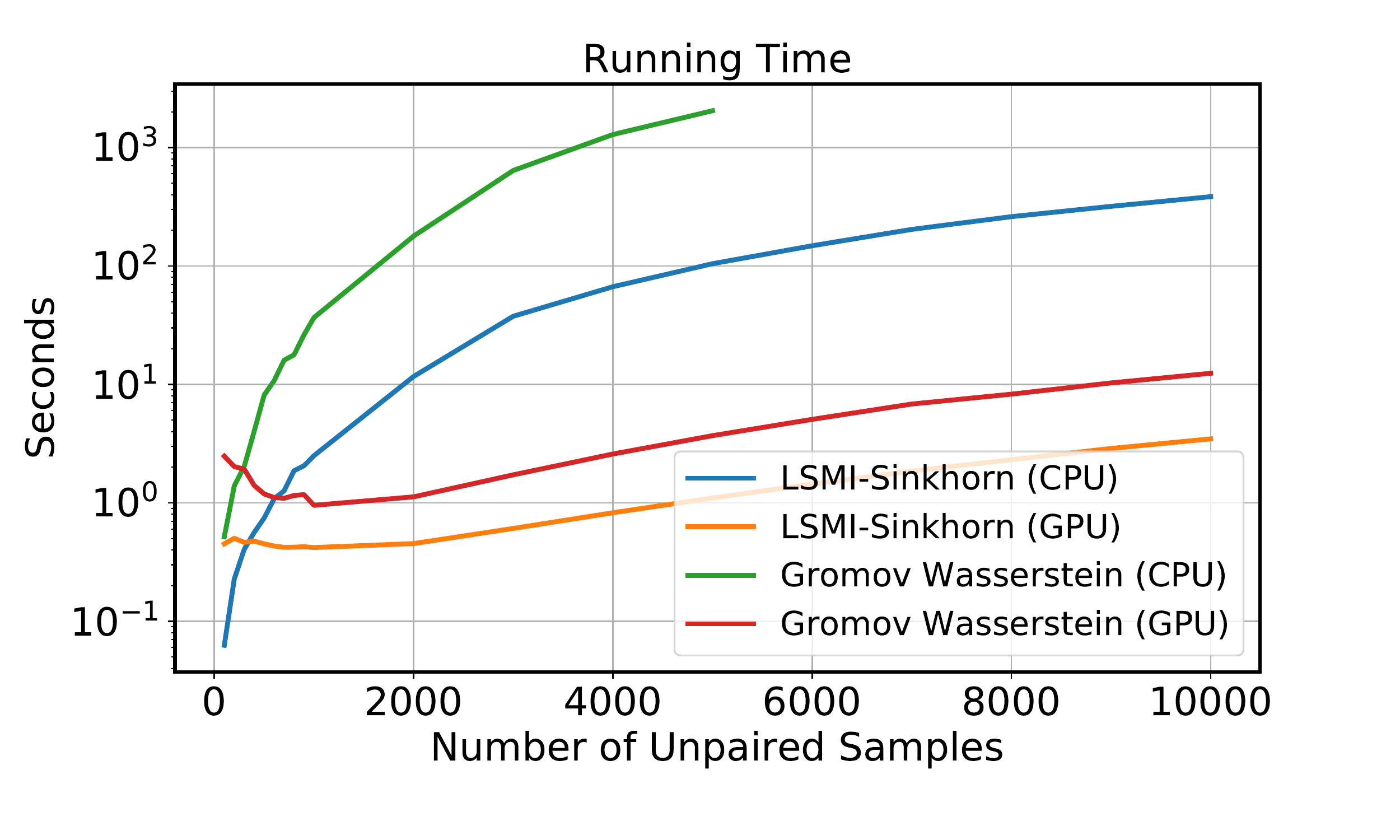}
    \caption{\label{fig:time}Runtime comparison of LSMI-Sinkhorn and Gromov-Wasserstein. A base-10 log scale is used for the Y axis.}
\end{figure}

Then, we perform a comparison between the runtimes of the proposed LSMI-Sinkhorn and Gromov-Wasserstein for CPU and GPU implementations. The data are sampled from two 2D random measures, where $n_x=n_y \in \{100,200,\dots,$ $9000,10000\}$ is the number of unpaired data and $n=100$ is the number of paired data (only for LSMI-Sinkhorn). For Gromov-Wasserstein, we use the CPU implementation from Python Optimal Transport toolbox \cite{flamary2017pot} and the Pytorch GPU implementation from \cite{ICML19:bunnePytorchGW}. We use the squared loss function and set the entropic regularization $\epsilon$ to 0.005 according to the original code. For LSMI-Sinkhorn, we implement the CPU and GPU versions using numpy and Pytorch, respectively. For fair comparison, we use the log-stabilized Sinkhorn algorithm and the same early stopping criteria and the same maximum iterations as in Gromov-Wasserstein. As shown in Figure \ref{fig:time}, in comparison to the Gromov-Wasserstein, LSMI-Sinkhorn is more than one order of magnitude faster for the CPU version and several times faster for the GPU version. This is consistent with our computational complexity analysis. Moreover, the GPU version of our algorithm costs only 3.47s to compute $10,000$ unpaired samples, indicating that it is suitable for large-scale applications.

\begin{figure}[!htb]
    \centering
    \includegraphics[width=0.8\textwidth]{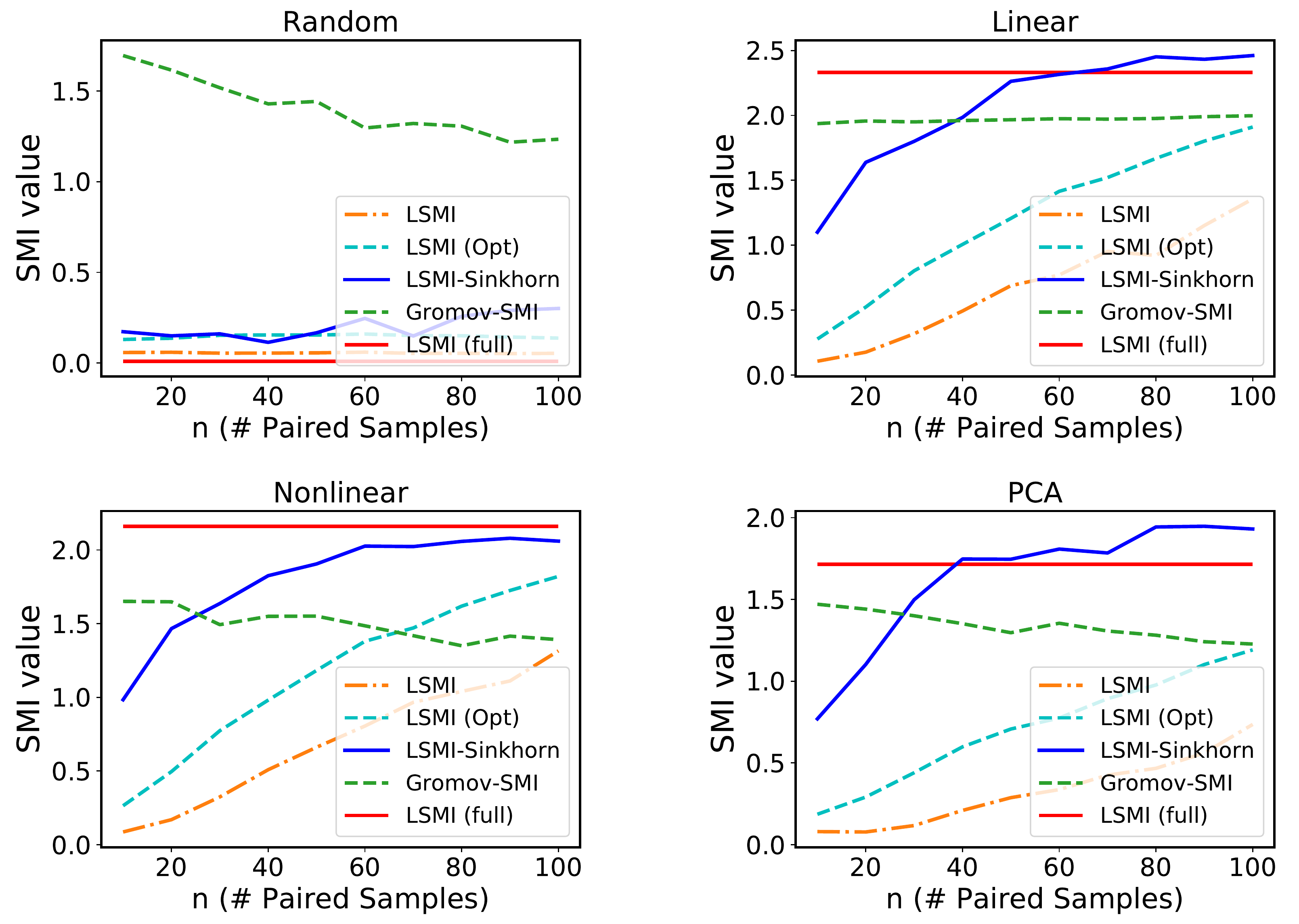}
    \caption{SMI estimation on synthetic data (\small{$n_x=n_y=500$)}.}
    \label{fig:synthetic}
\end{figure}

\begin{figure}[!htb]
    \centering
    \includegraphics[width=0.65\textwidth]{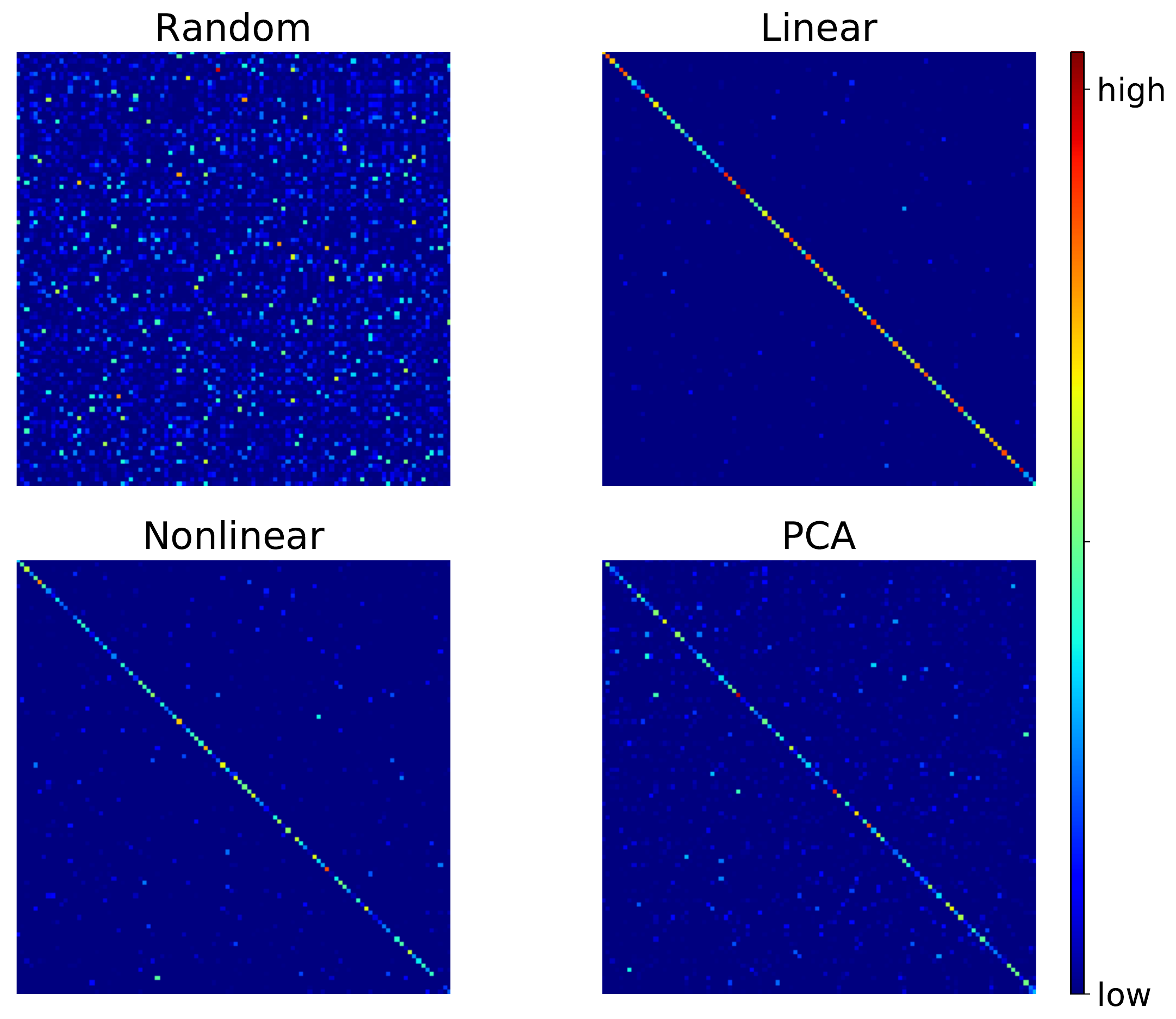}
    \caption{Visualization of the matrix $\boldPi$.}
    \label{fig:heatmap}
\end{figure}

\subsection{SMI Estimation}
For SMI estimation, we set up four baselines:
\begin{itemize}
\item \textbf{LSMI (full)}: $10,000$ paired samples are used for cross-validation and SMI estimation. It is considered as the ground truth value.
\item \textbf{LSMI}: Only $n$ (usually small) paired samples are used for cross-validation and SMI estimation. 
\item \textbf{LSMI (opt)}: $n$ paired samples are used for SMI estimation. However, we use the optimal parameters from LSMI (full) here. This can be seen as the upper bound of SMI estimation with limited number of paired data because the optimal parameters are usually unavailable. 
\item \textbf{Gromov-SMI}: The Gromov-Wasserstein distance is applied on unpaired samples to find potential matching ($\hat{n}=\min(n_x,n_y)$). Then, the $\hat{n}$ matched pairs and existing $n$ paired samples are combined to perform cross-validation and SMI estimation.
\end{itemize}

\begin{figure}[t]
    \centering
    \includegraphics[width=0.8\textwidth]{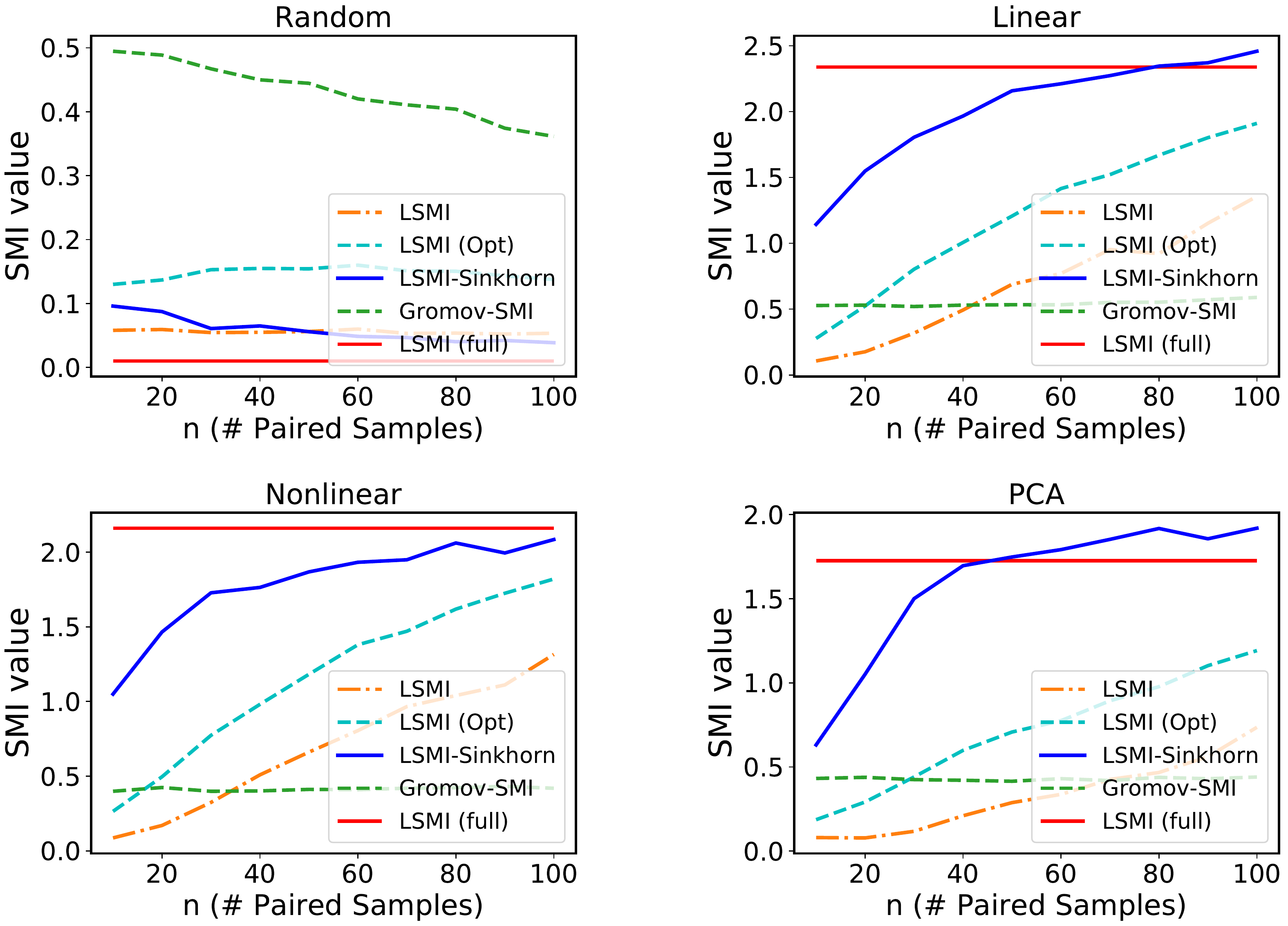}
    \caption{SMI estimation on synthetic data (\small{$n_x=1000, n_y=500$)}.}
    \label{fig:synthetic2}
\end{figure}

\noindent {\bf Synthetic Data.}
In this experiment, we manually generate four types of paired samples: random normal, $\boldy=0.5\boldx+\mathcal{N}(0,0.01)$ (Linear), $\boldy=\sin(\boldx)$ (Nonlinear), and $\boldy=\text{PCA}(\boldx)$. We change the number of paired samples $n\in\{10,20,\dots,100\}$ while \textcolor{black}{fixing $n_x=500$ and $n_y=500$} for Gromov-SMI and the proposed LSMI-Sinkhorn, respectively. The model parameters $\lambda$ and $\beta$ are selected by cross-validation using the paired examples with $\lambda \in \{0.1,0.01,0.001,0.0001\}$ and $\beta \in \{0.2,0.4,0.6,0.8,1.0\}$. The results are shown in Figure \ref{fig:synthetic}. In the random case, the data are nearly independent and our algorithm achieves a small SMI value. In other cases, LSMI-Sinkhorn yields a better estimation of the SMI value and it lies near the ground truth when $n$ increases. In contrast, Gromov-SMI has a small estimation value, which may be due to the incorrect potential matching. We further show the heatmaps of the matrix $\boldPi$ in Figure \ref{fig:heatmap}. For the random case, $\boldPi$ distributes uniformly as expected. For all other cases, $\boldPi$ concentrate on the diagonal, indicating good estimation for the unpaired samples.

To show the flexibility of the proposed LSMI-Sinkhorn algorithm, we set $n_x=1000, n_y=500$ and fix all other settings. The results are shown in Figure~\ref{fig:synthetic2}. Similarly, LSMI-Sinkhorn achieves the best performance among all methods. We also notice that Gromov-SMI achieves even worse estimation than $n_x=n_y$ case, which means it is not as stable as our algorithm to handle sophisticated situations ($n_x \neq n_y$).


\begin{figure}[t]
    \centering
    \includegraphics[width=0.8\textwidth]{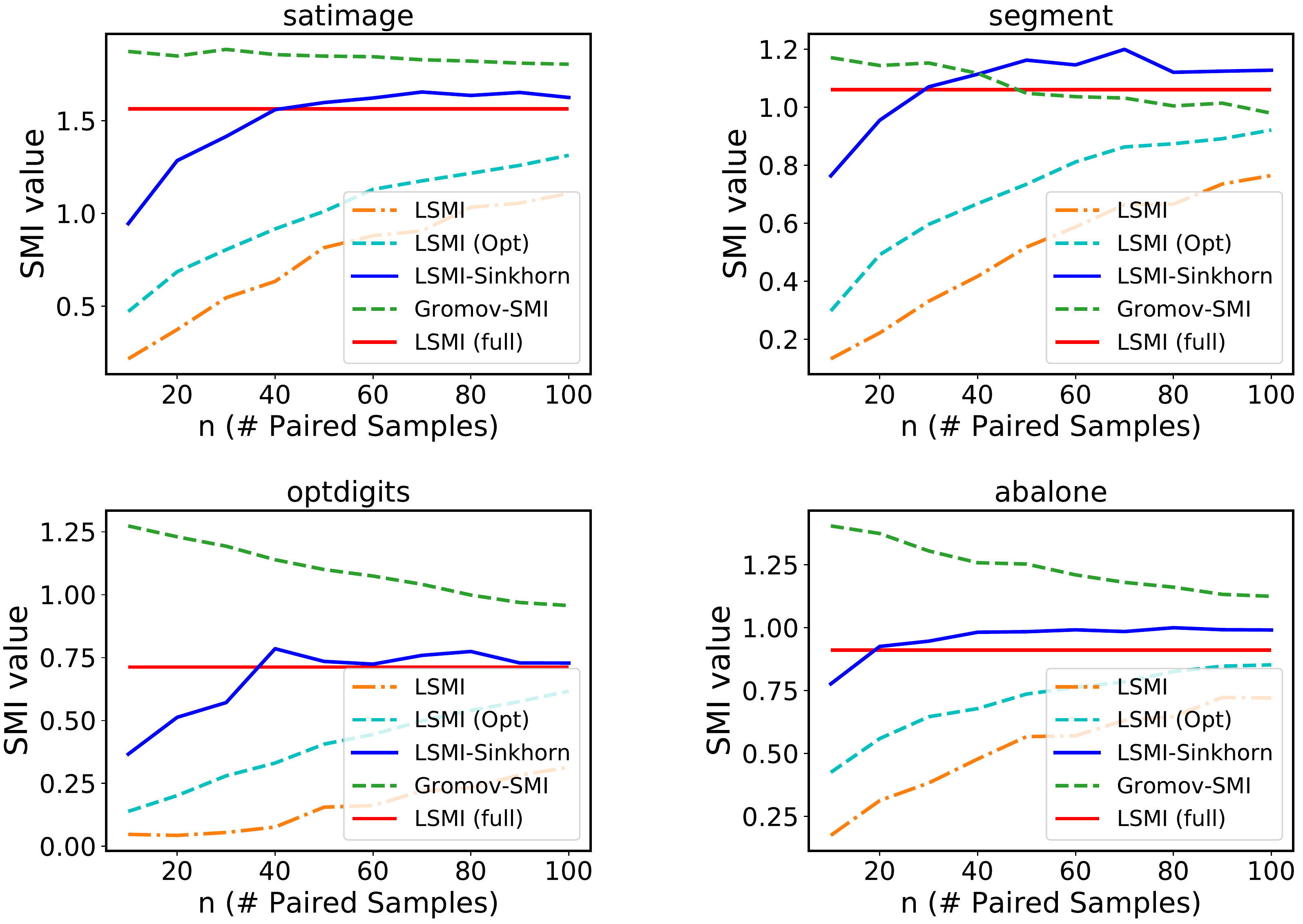}
    \caption{SMI estimation on UCI datasets.}
    \label{fig:uci}
\end{figure}

\noindent{\bf UCI Datasets.}
We selected four benchmark datasets from the UCI machine learning repository. For each dataset, we split the features into two sets as paired samples. To ensure high dependence between these two subsets of features, we utilized the same splitting strategy as \cite{PAMI:Quadrianto+etal:2010} according to the correlation matrix. \textcolor{black}{The experimental setting is the same as the synthetic data experiment}. We show the SMI estimation results in Figure \ref{fig:uci}. Similarly, LSMI-Sinkhorn obtains better estimation values in all four datasets. Gromov-SMI tends to overestimate the value by a large margin, while other baselines underestimate the value. 


\begin{figure}[t]
    \centering
    \includegraphics[width=0.8\textwidth]{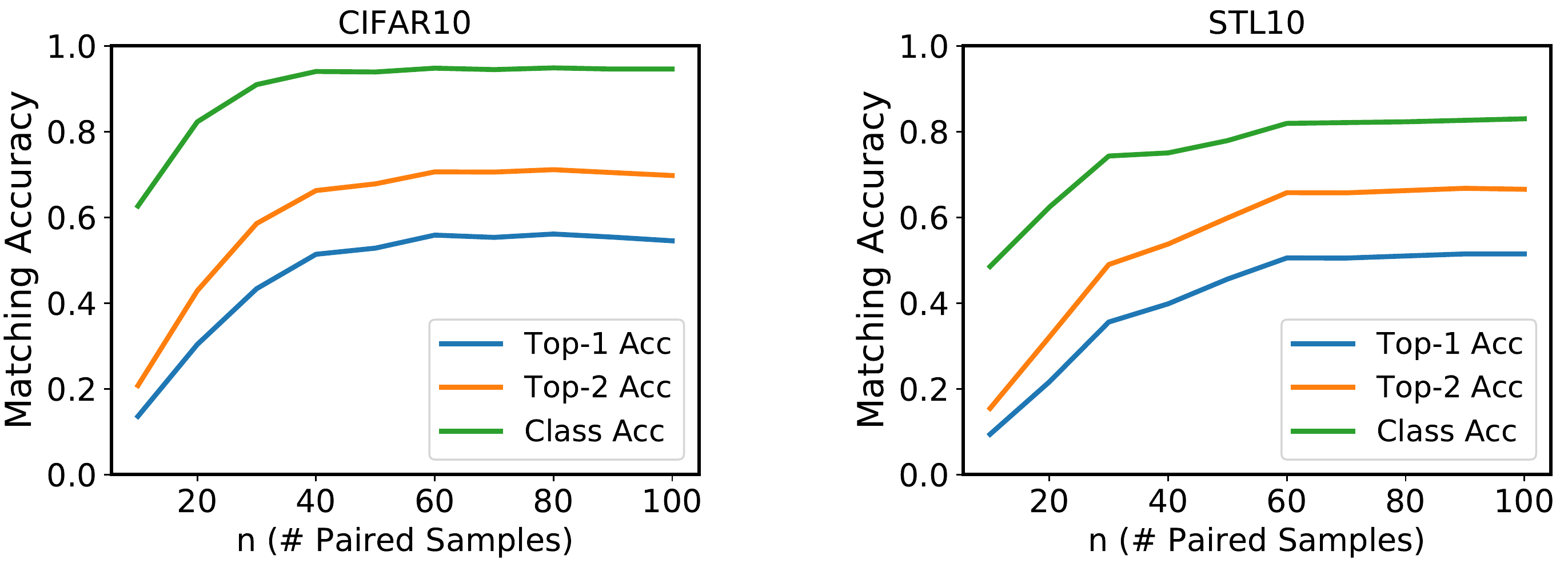}
    \caption{Deep image matching.}
    \label{fig:matching}
\end{figure}

\begin{figure}[!htb]
	\centering
	\includegraphics[height=1.5in]{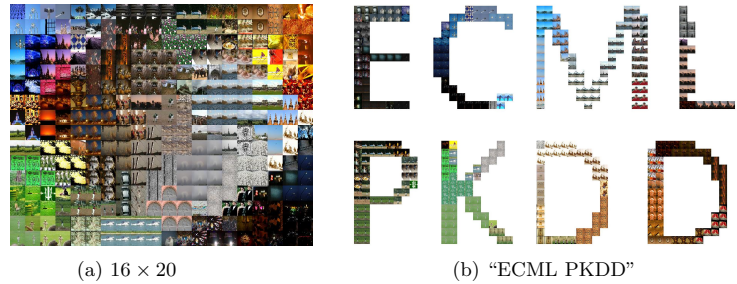}
	\caption{\label{fig:layout}Photo album summarization on Flickr dataset. In (a), we fixed the corners with \emph{blue}, \emph{orange}, \emph{green}, and \emph{black} images. 
		In (b), we fixed the center of each character with a different image.}
\end{figure}

\begin{figure}[!htb]
	\centering
	\includegraphics[height=1.5in]{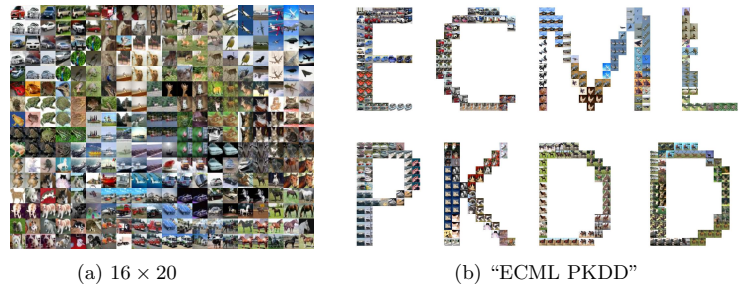}
	\caption{\label{fig:layout_cifar}Photo album summarization on CIFAR10 dataset. In (a), we fixed the corners with \emph{automobile}, \emph{airplane}, \emph{dog}, and \emph{horse} images. In (b), we fixed the center of each character with a different image.}
\end{figure}

\subsection{Deep Image Matching}
Next, we consider an image matching task with deep convolution features. We use two commonly-used image classification benchmarks: CIFAR10~
and STL10~\cite{AISTATS:coates2011STL10}. 
We extracted 64-dim features from the last layer (after pooling) of ResNet20 
pretrained on the training set of CIFAR10. The features are divided into two 32-dim parts denoted by $\{\boldx_i\}_{i=1}^N$ and $\{{\boldy_i}\}_{i=1}^N$. We shuffle the samples of $\boldy$ and attempt to match $\boldx$ and $\boldy$ with limited pair samples $(n \in \{10,20,\dots,100\})$ and unpaired samples $(n_x=n_y=500)$. Other settings are the same as the above experiments. 

To evaluate the matching performance, we used top-1 accuracy, top-2 accuracy (correct matching is achieved in the top-2 highest scores), and class accuracy (matched samples are in the same class). As shown in Figure \ref{fig:matching}, LSMI-Sinkhorn obtains high accuracy with only a few tens of supervised pairs. Additionally, the high class matching performance implies that our algorithm can be applied to further applications such as semi-supervised image classification.

\subsection{Photo Album Summarization}
\label{sec:album}

Finally, we apply the proposed LSMI-Sinkhorn to the photo album summarization problem, where images are matched to a predefined structure according to the Cartesian coordinate system.

\noindent{\bf Color Feature.} We first used 320 images from Flickr \cite{PAMI:Quadrianto+etal:2010} and extracted the RGB pixels as color feature. Figure~\ref{fig:layout} depicts the semi-supervised summarization to the $16\times 20$ grid with the corners of the grid fixed to \emph{blue}, \emph{orange}, \emph{green}, and \emph{black} images. Similarly, we show the summarization results on an ``ECML PKDD" grid with the center of each character fixed. It can be seen that these layouts show good color topology according to the fixed color images.

\noindent{\bf Semantic Feature.} We then used CIFAR10 with the ResNet20 feature to illustrate the semantic album summarization. Figure~\ref{fig:layout_cifar} shows the layout of 1000 images into the same $16\times 20$, and ``ECML PKDD" grids. In Figure~\ref{fig:layout_cifar}a, we fixed corners of the grid to \emph{automobile}, \emph{airplane}, \emph{dog}, and \emph{horse} images. In Figure \ref{fig:layout_cifar}b, we fixed the eight character centers. It can be seen that objects are aligned together by their semantics rather than colors according to the fixed images. 

Compared with previous summarization algorithms, LSMI-Sinkhorn has two advantages. (1) The semi-supervised property enables interactive album summarization, while kernelized sorting \cite{PAMI:Quadrianto+etal:2010,AAAI:djuric2012CKS} and object matching \cite{yamada2015cross} can not. (2) We obtained a solution for general rectangular matching (both $n_x=n_y$ and $n_x \neq n_y$), e.g., 320 images to a $16\times 20$ grid, 1000 images to a $16\times 20$ grid, while most previous methods \cite{PAMI:Quadrianto+etal:2010,yamada2015cross} relied on the Hungarian algorithm \cite{NRLQ:Kuhn:1955} to obtain square matching ($n_x = n_y$) only. 

\section{Conclusion}
In this paper, we proposed the Least-Square Mutual Information with Sinkhorn (LSMI-Sinkhorn) algorithm to estimate the SMI from a limited number of paired samples. To the best of our knowledge, this is the first semi-supervised SMI estimation algorithm. Experiments on synthetic and real data show that the proposed algorithm can successfully estimate SMI with a small number of paired samples. Moreover, we demonstrated that the proposed algorithm can be used for image matching and photo album summarization.

\section*{Acknowledgements}
Yanbin Liu and Yi Yang are supported by ARC DP200100938. Makoto Yamada was supported by MEXT KAKENHI 20H04243 and partly supported by MEXT KAKENHI 21H04874. 
Tam Le acknowledges the support of JSPS KAKENHI Grant number 20K19873. Yao-Hung Hubert Tsai and Ruslan Salakhutdinov were supported in part by the NSF IIS1763562, IARPA
D17PC00340, ONR Grant N000141812861, and Facebook PhD Fellowship.

\bibliographystyle{splncs04}

\end{document}